\documentclass{article}
\usepackage{spconf}
\usepackage{mathtools}
\usepackage{multirow}
\usepackage{hhline}
\usepackage{arydshln}
\usepackage{color}
\usepackage{hyperref}

\usepackage{amsmath,amssymb,amsfonts,amsthm,bbm, stmaryrd,  float}
\usepackage{epic,eepic,epsfig,longtable}
\usepackage{multirow,verbatim}
\usepackage{array}
\usepackage{verbatim}
\usepackage{epsfig}
\usepackage{setspace}
\usepackage{CJK}
\usepackage{color}

\usepackage[linesnumbered,ruled,vlined,commentsnumbered]{algorithm2e}

\usepackage{caption}
\usepackage[hang,flushmargin]{footmisc}
\usepackage{lipsum}
\newcommand{\algorithmfootnote}[2][\footnotesize]{%
  \let\old@algocf@finish\@algocf@finish
  \def\@algocf@finish{\old@algocf@finish
    \leavevmode\rlap{\begin{minipage}{\linewidth}
    #1#2
    \end{minipage}}%
  }%
}



\numberwithin{equation}{section}

\renewcommand{\hat}{\widehat}

\newcommand{\bfm}[1]{\ensuremath{\mathbf{#1}}}

          \def\cO{{\cal  O}}     
     \def\bP{\bfm P}

     \def\bU{\bfm U}          
\def\bv{\bfm v}     \def\bV{\bfm V}          
     \def\bW{\bfm W}          
     \def\bX{\bfm X}

\def\bzero{\bfm 0}

\newcommand{\bfsym}[1]{\ensuremath{\boldsymbol{#1}}}

       
       \def \bLambda  {\bfsym{\Lambda}}
          
\def \bSigma   {\bfsym{\Sigma}}       
         
\def \bOmega   {\bfsym{\Omega}}

\renewcommand{\hat}{\widehat}

\def \heps     {\hat{\heps}}



\DeclareMathOperator*{\argmin}{argmin}

\DeclareMathOperator{\diag}{diag}
\DeclareMathOperator{\E}{E}




 at 8truept

\def\today{\ifcase\month\or
  January\or February\or March\or April\or May\or June\or
  July\or August\or September\or October\or November\or December\fi
  \space\number\day, \number\year}

\newdimen\biblioindent\biblioindent=30pt
\newcommand{\beq}  {\begin{equation}}
\newcommand{\eeq}  {\end{equation}}
\newcommand{\beqn} {\begin{eqnarray}}
\newcommand{\eeqn} {\end{eqnarray}}
\newcommand{\beqnn}{\begin{eqnarray*}}
\newcommand{\eeqnn}{\end{eqnarray*}}

\allowdisplaybreaks
\setcounter{section}{0}

\setcounter{page}{1}

\pagestyle{plain}


\baselineskip=22pt

\newtheorem{lem}{Lemma}

\newtheorem{thm}{Theorem}
\newcounter{CondCounter}

\newcommand{\fnorm}[1]{\lVert#1\rVert_{F}}

\def \col {\text{Col}}

\def \RR	{\mathbb{R}}

\def\R{{\mathbb R}}

\def\E{{\mathbb E}}

\def\diag{{\rm diag}}
\def\Tr{{\rm Tr}}

\def\O{{\cal{O}}}

\def\argmin{{\rm argmin}}

\numberwithin{equation}{section}
\theoremstyle{plain}

\theoremstyle{definition}

\title{Online Distributed Estimation of Principal Eigenspaces}
%
\name{Davoud Ataee Tarzanagh, Mohamad Kazem Shirani Faradonbeh, George Michailidis}
\address{Informatics Institute, University of Florida, Gainesville, FL, USA, 32611}
%
%
%

\begin{document}
%
\maketitle
\begin{abstract}
	Principal components analysis (PCA) is a widely used dimension reduction technique with an extensive range of applications. In this paper, an {\em online distributed} algorithm is proposed for recovering the principal eigenspaces. We further establish its rate of convergence and show how it relates to the number of nodes employed in the distributed computation, the effective rank of the data matrix under consideration, and the gap in the spectrum of the underlying population covariance matrix. The proposed algorithm is illustrated on low-rank approximation and $\boldsymbol{k}$-means clustering tasks. The numerical results show a substantial computational speed-up vis-a-vis standard distributed PCA algorithms, without compromising learning accuracy.
\end{abstract}
\begin{keywords}
 distributed estimation, principal components analysis, local and global aggregations, streaming data.
\end{keywords}
\section{Introduction}\label{sec:intro}

Principal components analysis constitutes a fundamental technique for reducing the dimension of the data in numerous areas. It extracts a $K$ dimensional (affine) subspace that accounts for most of the variation in the data, thus revealing its low-dimensional structure. Further, it can serve as a preprocessing step to reduce the data dimension in various machine learning tasks including regression analysis, $\boldsymbol{k}$-means clustering \cite{kanungo2002efficient}, Non-negative Matrix Factorization (NMF)~\cite{lee1999learning}, and Latent Dirichlet Allocation (LDA) models~\cite{blei2003latent}. 

In many applications, large data sets are distributed across different locations. This leads to some new challenges in both the design and analysis of the learning procedures. Indeed, different considerations need to be carefully addressed including local data collection, privacy/sharing issues, reliability, and communication costs. To address these issues, one needs to design and analyze distributed estimation procedures. A typical method computes low-rank approximations for subsets of the full data, and aggregates the resulting estimates at a central node~\cite{wu2018review}. A number of such techniques have been presented in the literature, including distributed sketching~\cite{qu2002principal,liang2014improved,kannan2014principal,chen2016integrating,tarzanagh2018fast} and estimation~\cite{el2010second,fan2017distributed} methods. Recently, Fan et al.~\cite{fan2017distributed} studied a distributed estimation algorithm, wherein each local node computes the top $K$ eigenvectors for its own subset of the data, and then transmits them to a central node. Then, the central node runs an eigen-decomposition algorithm on the matrix comprising of all the eigenvectors received from the local nodes. Further, the estimation accuracy of the above method compared to PCA of the entire data is provided in the aforementioned reference.

In contrast to previous work on distributed estimation of principal eigenspaces, we are interested in an {\em online} scheme where the data becomes available in a sequential manner. We propose an {\em online distributed} algorithm consisting of two aggregation steps. In the first step, the incoming batch of the data is further split into subsets and allocated to different computation nodes. Each node computes a low-rank approximation of the corresponding subset, followed by a \emph{local} aggregation step to obtain an estimate of the principal eigenspaces of the current batch. Then, the final estimate is obtained by passing the intermediate batch results to a \textit{fusion} center that aggregates across all batches (global aggregation). Hence, the key contribution of this work is to introduce a fast distributed PCA algorithm, capable of providing high quality results for \emph{streaming} data.

The remainder of the paper is organized as follows. Section \ref{sec:2} introduces the problem of estimating the principal eigenspaces. Then, we discuss algorithms for distributed, as well as \emph{online and distributed} estimation of principal eigenspaces in Section \ref{sec:3}. Section \ref{sec:4} establishes theoretical properties, while Section \ref{sec:5} illustrates the methodology on selected real data sets. 

\section{Problem Formulation} \label{sec:2}

The objective is to learn the $K$ dimensional principal eigenspace of the \emph{unknown} population covariance matrix $\bSigma$ of the data. Technically, we want to estimate the subspace spanned by the eigenvectors corresponding to the top $K$ eigenvalues of $\bSigma \in \RR^{d \times d}$, for a given $K \in [d]\equiv\left\{ 1, \cdots, d\right\}$. The data consists of \emph{streaming} i.i.d samples $\{\bX_i \left(t \right) \}_{i=1}^N\subseteq \RR^d$, being observed over the time interval $1 \leq t \leq T$. Hence, the positive semidefinite matrix $\bSigma$ corresponds to the population covariance matrix of the centered samples; $\E \bX_1\left(t\right)=\bzero$, $\E \left[\bX_1\left(t\right)\bX_1^\top\left(t\right)\right]= \bSigma$. 

We further assume that the samples are generated from a sub-Gaussian distribution; $\| \| \bX_1\left(t\right) \|_2 \|_{\psi_2} <\infty$, where $\| \cdot \|_{\psi_i}$ denotes the Orlicz norm: $\|X\|_{\psi_i}= \sup_{p\ge 1}(\E|X|^p)^{1/p} p^{-1/i}$. To proceed, consider the eigenvalue decomposition $\bSigma= \bV\bLambda\bV^\top$, where $\bLambda= \diag(\lambda_1, \lambda_2, \cdots, \lambda_d)$ with $\lambda_1 \ge \lambda_2 \ge \cdots \ge \lambda_d$, and $\bV= (\bv_1, \cdots, \bv_d)\in \O_{d\times d}$ (the set of $d \times d$ orthonormal matrices). Further, let $\bV_K=(\bv_1, \cdots, \bv_K)$ denote the orthonormal matrix comprising of the $K$ eigenvectors corresponding to the largest $K$ eigenvalues $\lambda_j, j=1,\cdots,K$. Hence, we want to identify the subspace $\col(\bV_K)$, utilizing the streaming data $\{\bX_i \left(t\right) \}_{i=1}^N$.

In order to ensure that $\col(\bV_K)$ is identifiable, we assume that there is a gap between the eigenvalues; $\delta := \lambda_K - \lambda_{K+1}>0$. Throughout the paper, let $\kappa := \lambda_1/\delta$ and $r=r(\bSigma):= \Tr(\bSigma)/\lambda_1$ denote the condition number and the effective rank of $\bSigma$, respectively. 

\section{Algorithm} \label{sec:3}

In the offline (i.e. non-streaming) setting where $T=1$, the standard procedure for estimating $\col(\bV_K)$ is to use the $K$ leading eigenvectors of the empirical covariance matrix\footnote{For $T=1$, we drop the trivial time indicator $(1)$.} $\hat{\bSigma}=N^{-1}\sum_{i=1}^{N}\bX_i \bX_i^\top$. Let $\hat\bSigma= \hat\bV \hat\bLambda {\hat\bV}^\top$ be the eigenvalue decomposition of $\hat\bSigma$, analogous to that of $\bSigma$ defined in the previous section. Then, $\col(\hat\bV_K)$ is being used to estimate $\col(\bV_K)$. 

Next, we briefly discuss the Distributed PCA algorithm (\textsc{Dpca})~\cite{fan2017distributed}. In order to estimate the principal eigenspace $\col(\bV_K)$ in a distributed manner, the data is split across $m$ nodes, each possessing ${n}$ samples; i.e. $N=m{n}$. For each $\ell \in [m]$, the projection matrix $\hat{\bV}_K^{(\ell)}$ corresponding to the top $K$ empirical eigenvectors of node $\ell$ are formed. Then, $\bar{\bSigma}= m^{-1}\sum\limits_{i=1}^m \hat{\bV}_K^{(\ell)} \hat{\bV}_K^{(\ell)^\top}$ is calculated, followed by its eigenvalue decomposition to obtain the $K$ leading eigenvectors of $\bar{\bSigma}$, denoted by $\bar{\bV}_{K}=(\bar{\bv}^{(\ell)}_1, \cdots, \bar{\bv}^{(\ell)}_K)$. 

Clearly, the communication cost of \textsc{Dpca} is $O(NKd/{n})$. Indeed, every node needs to send the $d \times K$ matrix $\hat{\bV}^{(\ell)}_K$ to the central aggregation node. Thus, for large data sets, in addition to possible storage issues, there is a risk of communication congestion. These issues are further compounded in the presence of streaming data.
\begin{algorithm}[t]
	\caption{Online Distributed PCA (\textsc{Odpca}). }
	\label{algo:2}
	\SetKwInOut{Input}{Input}
	\SetKwInOut{Output}{Output}
	\BlankLine
	Initialize $\widetilde{\bSigma}(0) =0_{d \times d}$ \;
	\For{$t\leftarrow 1$ \KwTo $T$}{\label{forins}
		\Input{online data $\{\bX_i^{(\ell)}(t)\}_{i \in [n], \ell \in [m]}$;}
		\For{$\ell\leftarrow 1$ \KwTo $m$}{
			Compute $\hat{\bV}^{(\ell)}_K(t)$; the $K$ leading eigenvectors of $\hat{\bSigma}^{(\ell)}(t)=n^{-1}\sum_{i=1}^{n}\bX_i^{(\ell)}(t)\bX^{(\ell)\top}_{i}(t)$\;
		}
		Compute $\bar{\bV}_K(t)$; the $K$ leading eigenvectors of $\bar{\bSigma}(t)=m^{-1}\sum_{\ell=1}^{m}\hat{\bV}^{(\ell)}_K(t)\hat{\bV}^{(\ell)\top}_K(t)$\;
		Update $\widetilde{\bSigma}(t)= \widetilde{\bSigma}(t-1)+T^{-1}\bar{\bV}_K(t)\bar{\bV}^\top_K(t)$\;
	}
	\BlankLine
	Let $\{\widetilde{\bv}_{j}(T)\}_{j=1}^K$ be the top $K$ eigenvectors of $\widetilde{\bSigma}(T)$\;
	\Output{~$\widetilde{\bV}_{K}(T)=(\widetilde{\bv}_{1}(T),\cdots,\widetilde{\bv}_{K}(T))\in\R^{d\times K}$.}
\end{algorithm}

To address the above issues, we propose Algorithm~\ref{algo:2} (\textsc{Odpca}) for estimating the principal eigenspaces in an {\em online and distributed} fashion. Let $N=T m n$ denote the total number of samples, and let $\{\bX_i^{(\ell)}(t)\}_{i \in [n]}$ be the batch of data that becomes available to node $\ell \in [m]$, at time $t \in [T]$. In the first round of aggregation all local nodes contribute to obtain the approximation $\bar{\bSigma}(t)$ at time $t$. Then, the final estimate across all data batches, $\widetilde{\bV}_K(T)$, is calculated by the fusion center. The communication cost for the local processors is divided by the number of time steps; so it is $O( NKd/ \left(nT\right) )$. 

In order to analyze \textsc{Odpca}, first define $\Delta(\bU,\bV) :=\|\bU \bU^\top-\bV \bV^\top\|_{F}$, where $\| \cdot \|_{F}$ denotes the Frobenius norm, and  $\bU,\bV$ are arbitrary projection matrices with the same number of rows. The metric $\Delta$ is well-defined, and is commonly used in the literature to reflect the distance between subspaces, as well as the projections on them~\cite{yu2014useful,fan2017distributed}. Next, consider the online problem \cite{nazari2019dadam} $\min\limits_{\bU\in\cO_{d\times K}} H(\bU),$ where $H(\bU) := (T m)^{-1} \sum_{t=1}^{T}\sum_{\ell=1}^{m}\Delta^2(\bU,\widehat{\bV}_K^{(\ell)}(t))$. Intuitively, a minimizer $\bU^*$ is a ``center" of $\{ \hat{\bV}^{(\ell)}_K(t) \}_{ \ell \in [m], t \in[T]}$, which is by the following result a projection matrix on the principal eigenspace of $\bOmega := (T m)^{-1} \sum\limits_{t=1}^{T}\sum\limits_{\ell=1}^{m} \widehat{\bV}_K^{(\ell)}(t) {\widehat{\bV}_K^{(\ell)}(t)}^\top$. 
\begin{lem}\label{lem-loss}
The matrix $\bU^* \in \cO_{d\times K} $ consists of the top $K$ eigenvectors of $\bOmega$, if and only if  ${\bU}^*  \in \argmin_{\bU\in\cO_{d\times K}} H(\bU)$.
\end{lem}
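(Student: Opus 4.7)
The plan is to reduce $H(\bU)$ to an affine function of the trace $\mathrm{Tr}(\bU^\top \bOmega \bU)$ on the Stiefel manifold $\cO_{d\times K}$, and then apply the standard variational characterization of the top-$K$ eigenspace (Ky Fan / Courant--Fischer).

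First, I would simplify the squared subspace distance. For any two orthonormal $\bU,\bV\in\cO_{d\times K}$, expanding the Frobenius norm gives
\[
\Delta^2(\bU,\bV) = \mathrm{Tr}\!\left((\bU\bU^\top - \bV\bV^\top)^2\right) = \|\bU\bU^\top\|_F^2 + \|\bV\bV^\top\|_F^2 - 2\mathrm{Tr}(\bU\bU^\top \bV\bV^\top).
\]
Using $\bU^\top\bU=\bV^\top\bV=\bI_K$, the cyclic property of the trace yields $\|\bU\bU^\top\|_F^2=\mathrm{Tr}(\bU^\top\bU\bU^\top\bU)=K$ and similarly for $\bV$, so $\Delta^2(\bU,\bV)=2K-2\mathrm{Tr}(\bU^\top\bV\bV^\top\bU)$.

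Next, substituting $\bV=\widehat{\bV}_K^{(\ell)}(t)$, summing over $\ell\in[m]$ and $t\in[T]$ and averaging, linearity of the trace together with the definition of $\bOmega$ gives the closed form
\[
H(\bU) = 2K - 2\,\mathrm{Tr}\!\left(\bU^\top \bOmega\, \bU\right).
\]
Hence minimizing $H$ over $\cO_{d\times K}$ is equivalent to maximizing $\mathrm{Tr}(\bU^\top\bOmega\bU)$ on the same set. Since $\bOmega$ is symmetric and positive semidefinite, Ky Fan's maximum principle states that this maximum equals $\sum_{i=1}^K \lambda_i(\bOmega)$ and is attained precisely when the columns of $\bU$ form an orthonormal basis of the invariant subspace associated with the $K$ largest eigenvalues of $\bOmega$; this establishes both implications of the lemma.

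The only delicate point, which I would handle explicitly, is the \emph{only if} direction. Writing the spectral decomposition $\bOmega=\sum_{i=1}^d \mu_i \bu_i\bu_i^\top$ with $\mu_1\ge\cdots\ge\mu_d$, and decomposing $\mathrm{Tr}(\bU^\top\bOmega\bU)=\sum_{i=1}^d \mu_i \|\bU^\top\bu_i\|_2^2$ subject to $\sum_i\|\bU^\top\bu_i\|_2^2=K$ and $\|\bU^\top\bu_i\|_2^2\le 1$, a straightforward rearrangement argument shows that the upper bound $\sum_{i=1}^K\mu_i$ is saturated only when all of the mass is concentrated on the top-$K$ eigenspace, i.e.\ $\bU\bU^\top$ coincides with the spectral projector onto that eigenspace, which is exactly the characterization of $\bU^*$ in the statement. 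This is the main (and essentially only) non-routine step; the rest is bookkeeping.
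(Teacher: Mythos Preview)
Your proof is correct and follows essentially the same route as the paper: both reduce $H(\bU)$ to a constant minus $2\,\mathrm{Tr}(\bU^\top\bOmega\bU)$ and then invoke the variational characterization of the top-$K$ eigenspace. Your simplification to $H(\bU)=2K-2\,\mathrm{Tr}(\bU^\top\bOmega\bU)$ is a bit more direct than the paper's bias--variance style expansion, and your explicit rearrangement argument for the ``only if'' direction is more detailed than the paper's one-line appeal to the principal subspace, but the underlying idea is identical.
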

\begin{proof}
Letting $\widehat{\bP}^{(\ell)}(t)=\widehat{\bV}_K^{(\ell)}(t) \widehat{\bV}_K^{(\ell)\top}(t)$, we have $H(\bU) = \| \bOmega \|_{F}^2 + \|\bU\bU^\top\|_{F}^2 
	+ (T m)^{-1} \sum_{t=1}^{T}\sum_{\ell=1}^{m} \|\bOmega - \hat{\bP}^{(\ell)}(t)\|_{F}^2 - 2 \Tr(\bU \bU^\top \bOmega )$. \\Then, $\|\bU\bU^\top\|_{F}^2$ is constant since $\bU$ is orthonormal. Thus, a minimizer of $H(\cdot)$ maximizes $\Tr(\bU \bU^\top\bOmega )= \Tr( \bU^\top \bOmega \bU)$; i.e. is the projection matrix to a principal subspace.
\end{proof}
In the next section, we show that Algorithm~\ref{algo:2} provides an accurate approximation of $\bU^*$. Indeed, the intermediate steps of calculating $\bar{\bV}_K(t)$ lead to significantly faster computations with negligible growth in the statistical error.
 
\section{Theoretical Analysis} \label{sec:4}
Next, we analyze the learning error in terms of the distance $\Delta(\widetilde{\bV}_{K}(T), \bV_K)$ defined in the previous section. The main result states that the statistical error rates are the same as using the full sample $\{\bX_i^{(\ell)}(t)\}_{i \in [n], \ell \in [m],t \in [T]}$, as long as the sample size $n$ of each node $\ell$ at every time $t$ is large enough. Henceforth, we focus on the centralized error $\Delta(\widetilde{\bV}_{K}(T), \widetilde{\bW}_K)$, where $\widetilde{\bV}_{K}(T)$ is the output of Algorithm~\ref{algo:2} (i.e. principal eigenvectors of $\widetilde{\bSigma}(T)$), and $\widetilde{\bW}_K$ consists of the top $K$ eigenvectors of $\E \left[ \widetilde{\bSigma}(T) \right]$. A comprehensive analysis of the deterministic bias $\Delta(\widetilde{\bW}_K, \bV_K)$ for both symmetric and asymmetric distributions is established before~\cite{fan2017distributed}.

The following result addresses the behavior of the statistical error. Note that if $n$ is large enough, the assumptions of Theorem~\ref{thm:1} will be satisfied~\cite{fan2017distributed}.
\begin{thm}\label{thm:1}
Suppose that in Algorithm~\ref{algo:2}, $n\geq r$, and $\| \bOmega^* -\bV_K \bV_K^\top \|_2 < 1/4$, where $\bOmega^*:=\E \left[\hat{\bV}_K^{(\ell)}(T) {\hat{\bV}_K^{(\ell)}(T)}^\top \right]$. Them, we have\footnote{the notation ``$\lesssim$" states that ``$\leq$" holds, modulo a universal constant.}
\begin{eqnarray*}
	\left\| \Delta(\widetilde{\bV}_{K}(T), \widetilde{\bW}_K) \right\|_{\psi_1} \lesssim \kappa \sqrt{\frac{Kr}{Tmn}}.
\end{eqnarray*}
\end{thm}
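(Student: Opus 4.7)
My plan is to isolate the stochastic fluctuation of $\widetilde{\bSigma}(T)$ about its mean, pass from covariance perturbation to eigenspace perturbation via a Davis--Kahan inequality, and then exploit the fact that the $T$ batches are iid to gain a $1/\sqrt{T}$ factor via a Bernstein-type inequality. The per-batch Frobenius scale of the summands will be controlled by invoking the single-batch \textsc{Dpca} error bound of~\cite{fan2017distributed}, which supplies the $\kappa\sqrt{Kr/(mn)}$ rate.

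\textbf{Step 1: Davis--Kahan reduction.} Because $\widetilde{\bV}_{K}(T)$ and $\widetilde{\bW}_{K}$ are the top-$K$ eigenprojections of $\widetilde{\bSigma}(T)$ and $\E[\widetilde{\bSigma}(T)]$ respectively, the Yu--Wang--Samworth $\sin\Theta$ inequality~\cite{yu2014useful} gives
\[
\Delta\bigl(\widetilde{\bV}_{K}(T),\,\widetilde{\bW}_K\bigr) \;\lesssim\; \frac{\|\widetilde{\bSigma}(T) - \E[\widetilde{\bSigma}(T)]\|_F}{\tilde{\delta}},
\]
where $\tilde{\delta}$ is the gap between the $K$-th and $(K{+}1)$-st eigenvalues of $\E[\widetilde{\bSigma}(T)]$. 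Setting $\bar{\bOmega}:=\E[\bar{\bV}_K(1)\bar{\bV}_K(1)^\top]=\E[\widetilde{\bSigma}(T)]$ (iid batches in $t$), I would lower-bound $\tilde{\delta}$ by a constant as follows: since $\bar{\bV}_K(1)$ is the top-$K$ eigenprojection of $\bar{\bSigma}(1)$ whose mean equals $\bOmega^*$, a Davis--Kahan perturbation applied inside the expectation, together with Jensen and the hypothesis $\|\bOmega^* - \bV_K\bV_K^\top\|_2 < 1/4$, yields $\|\bar{\bOmega} - \bV_K\bV_K^\top\|_2 < 1/2$, so $\tilde{\delta}\ge 1/2$ by Weyl.

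\textbf{Step 2: Bernstein concentration.} Decompose $\widetilde{\bSigma}(T) - \E[\widetilde{\bSigma}(T)] = T^{-1}\sum_{t=1}^{T} Z_t$ with iid centered summands $Z_t := \bar{\bV}_K(t)\bar{\bV}_K(t)^\top - \bar{\bOmega}$. Under $n\ge r$, the single-batch \textsc{Dpca} analysis of~\cite{fan2017distributed} gives that $\Delta(\bar{\bV}_K(t),\bV_K)$ has Orlicz-$1$ norm $\lesssim \kappa\sqrt{Kr/(mn)}$. By the triangle inequality $\|Z_t\|_F \le \Delta(\bar{\bV}_K(t),\bV_K) + \|\bar{\bOmega}-\bV_K\bV_K^\top\|_F$ and Jensen for the deterministic piece, $\|Z_t\|_F$ carries the same Orlicz-$1$ scale. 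A Hilbert-space-valued Bernstein inequality for the iid sum in the Frobenius (Hilbert--Schmidt) norm then delivers
\[
\|\widetilde{\bSigma}(T) - \E[\widetilde{\bSigma}(T)]\|_F \;\lesssim\; \kappa\sqrt{Kr/(Tmn)}
\]
in Orlicz-$1$ norm. Combined with $\tilde{\delta}\ge 1/2$ from Step 1, this yields the claimed bound.

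\textbf{Main obstacle.} The most delicate step is controlling $\tilde{\delta}$ from the spectral hypothesis, which is stated for $\bOmega^*$ rather than for the batch-level mean $\bar{\bOmega}$. Since $\bar{\bOmega}$ is the expectation of the eigenprojection of $\bar{\bSigma}(t)$ and not of $\E[\bar{\bSigma}(t)] = \bOmega^*$ itself, one must propagate the spectral gap of $\bOmega^*$ through the in-batch eigenvector extraction; this demands that the per-batch fluctuation of $\bar{\bSigma}(t)$ around $\bOmega^*$ be small compared to the gap of $\bOmega^*$, which is precisely what the joint conditions $n\ge r$ and $\|\bOmega^* - \bV_K\bV_K^\top\|_2 < 1/4$ ensure.
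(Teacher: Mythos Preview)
Your proposal is correct and follows essentially the same route as the paper: reduce via Davis--Kahan to $\|\widetilde{\bSigma}(T)-\E\widetilde{\bSigma}(T)\|_F$, control each summand's $\psi_1$-scale through the single-batch \textsc{Dpca} bound of~\cite{fan2017distributed}, average over the $T$ iid batches to gain $T^{-1/2}$, and lower-bound the eigengap of $\E\widetilde{\bSigma}(T)$ from the hypothesis on $\bOmega^*$ via Jensen and Weyl. The only cosmetic difference is that the paper centers the per-batch argument at $\bV_K^*$ (the top-$K$ eigenvectors of $\bOmega^*$) rather than at $\bV_K$, and invokes the concentration step as Lemma~4 of~\cite{fan2017distributed} rather than a generic Hilbert-space Bernstein inequality.
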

\begin{proof}
Let $\bV^*_K$ contain the top $K$ eigenvectors of $\bOmega^*$, and denote $\alpha= \kappa \sqrt{Kr (mn)^{-1}}$. Using  Theorem~1 in~\cite{fan2017distributed}, $\| \bOmega^* -\bV_K \bV_K^\top \|_2 < 1/4$ implies that $\left\| \Delta(\bar{\bV}_{K}(t), \bV_K^*) \right\|_{\psi_1} \lesssim \alpha$, for all $t \in [T]$. Further, applying Jensen's inequality to the definition of $\left\| \cdot \right\|_{\psi_1}$, we get $ \left\| \E \left[ \bar{\bV}_{K}(t) \bar{\bV}_{K}(t)^\top- \bV_K^* {\bV_K^*}^\top \right] \right\|_{F} \lesssim \alpha$. Therefore, by the triangle inequality, $\lambda_K \left(\E \widetilde{\bSigma}(T)\right) -\lambda_{K+1}\left(\E \widetilde{\bSigma}(T)\right)$ is a positive number bounded away from zero, as long as $mn$ is sufficiently large. Then, by the triangle inequality, $ \left\| \fnorm{\bar{\bV}_{K}(t) \bar{\bV}_{K}(t)^\top - \E \left[\bar{\bV}_{K}(t) \bar{\bV}_{K}(t)^\top\right]} \right\|_{\psi_1} \lesssim \alpha$ holds, which according to Lemma~4 in~\cite{fan2017distributed} implies that 
\begin{eqnarray} \label{proofeq1}
\left\| \fnorm{\widetilde{\bSigma}(T) - \E \widetilde{\bSigma}(T)} \right\|_{\psi_1} \lesssim \frac{\alpha}{\sqrt{T}}. 
\end{eqnarray}
Thus, since the eigengap of $\E \widetilde{\bSigma}(T)$ is bounded from below, according to Davis-Kahan’s Theorem (Theorem~2 in~\cite{yu2014useful}) \eqref{proofeq1} implies the desired result.
\end{proof}

\section{Real-Data Experiments} \label{sec:5}
Next, we compare the performance of \textsc{Odpca} and \textsc{Dpca}~\cite{fan2017distributed}. The experiments correspond to solving rank--$K$ approximation and $\boldsymbol{k}$-means clustering tasks. We set $K=10$ for rank--$K$ approximation, and $k=10$ for $\boldsymbol{k}$-means clustering. The data sets being used in this section are as follows\footnote{Obtained from the UCI Machine Learning Repository: \url{http://archive.ics.uci.edu/ml}.}: \textbf{(i)} NewsGroups ($N=18774,d={61188}$); \textbf{(ii)} MNIST ($N=7 \times 10^4, d=784$); \textbf{(iii)} BOW NYTimes ($N=3\times 10^5 , d= 102660$). 
\begin{figure}[t]
	\centering \makebox[0in]{
		\begin{tabular}{cc}
			\includegraphics[scale=0.092]{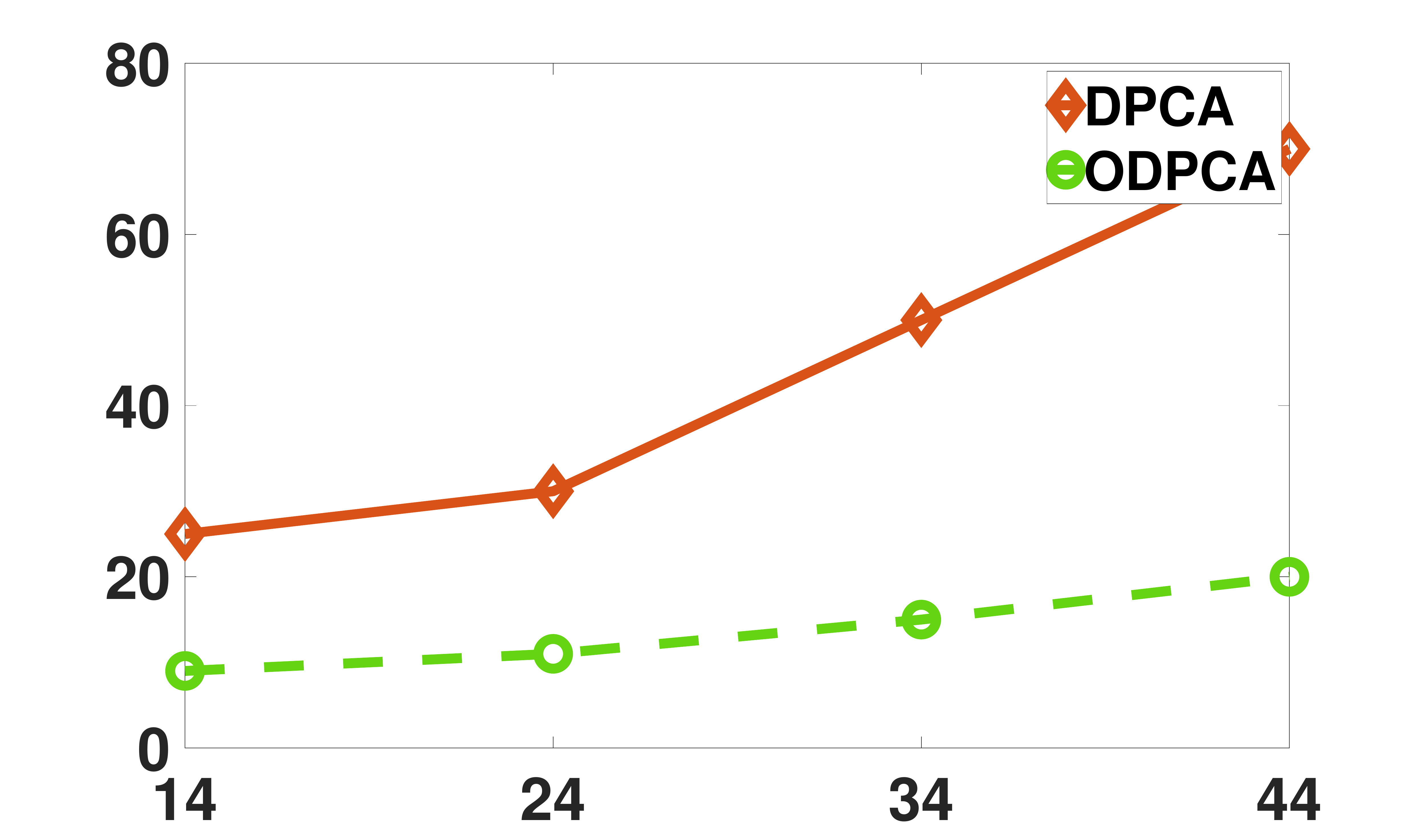}
			\includegraphics[scale=0.092]{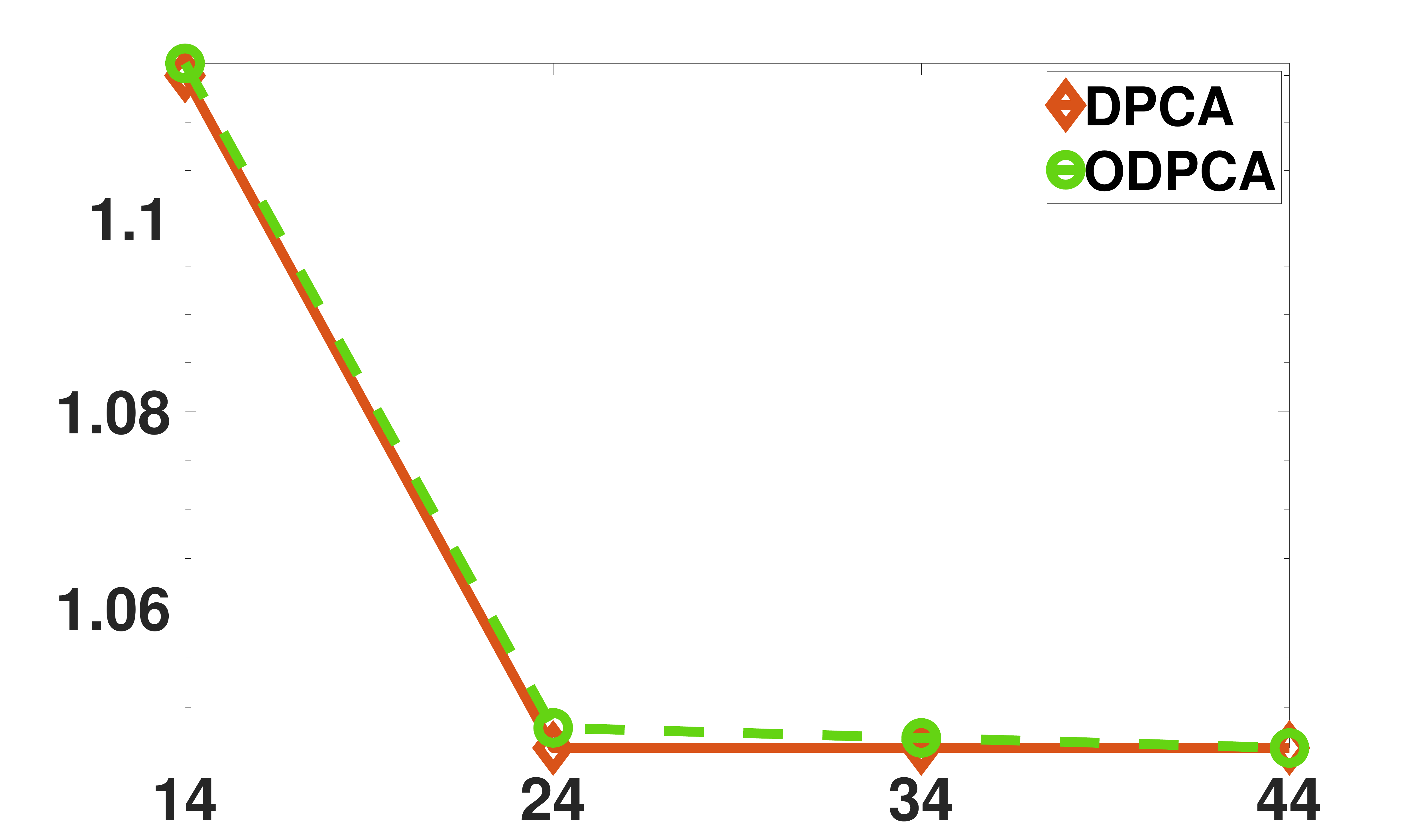}
			\\
			(a) ~ MNIST
			\\
			\includegraphics[scale=0.092]{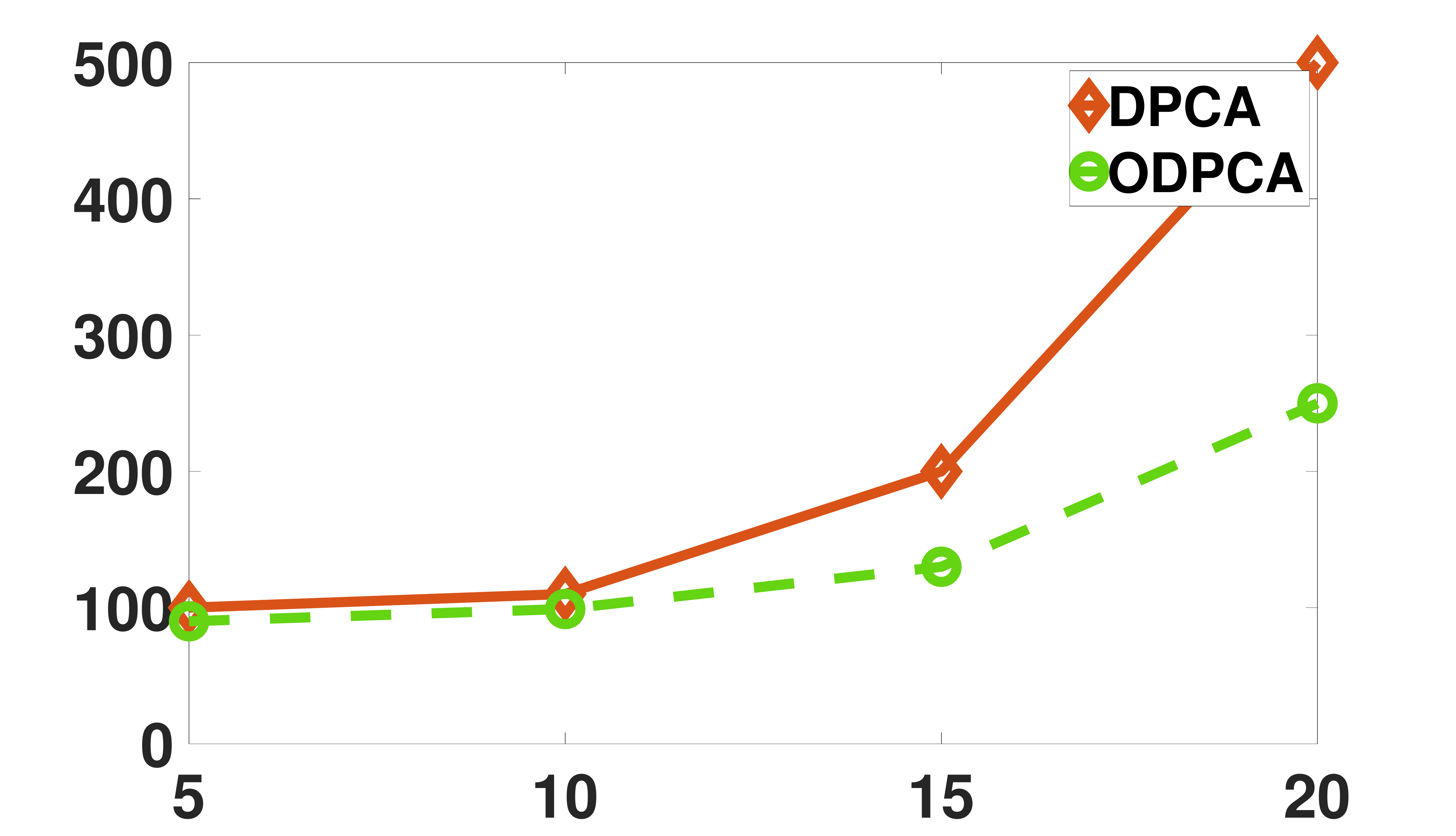}
			\includegraphics[scale=0.092]{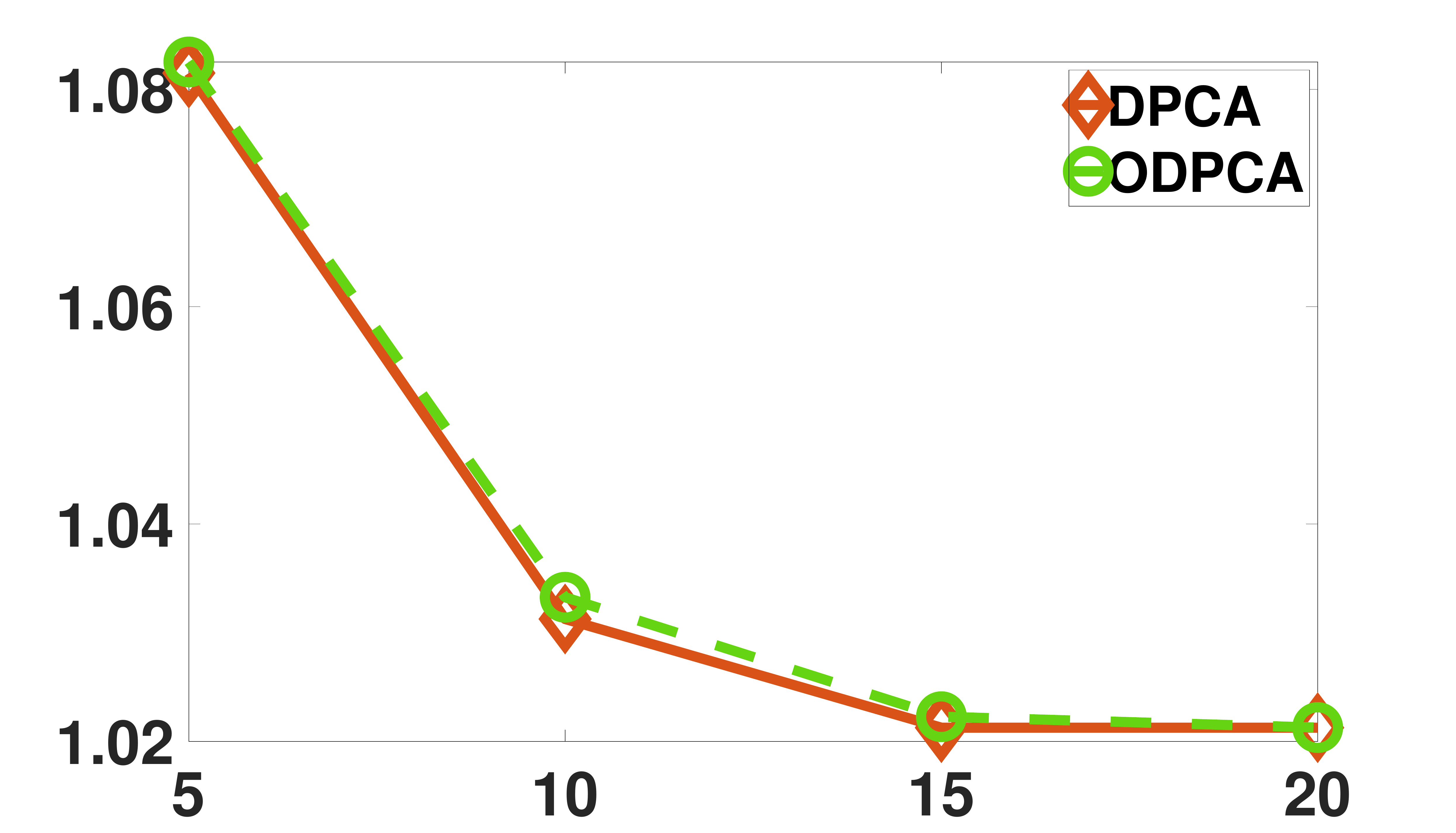}
			\\
			(b) ~ NewsGroups 
			\\
		\end{tabular}
	}
	\caption{Low-rank approximation: running time (left), and relative estimation error (right), v.s. projection dimension.}\label{fig:lowrank}
\end{figure}
\begin{figure}[t]
	\centering \makebox[0in]{
\begin{tabular}{cc}
\includegraphics[scale=0.089]{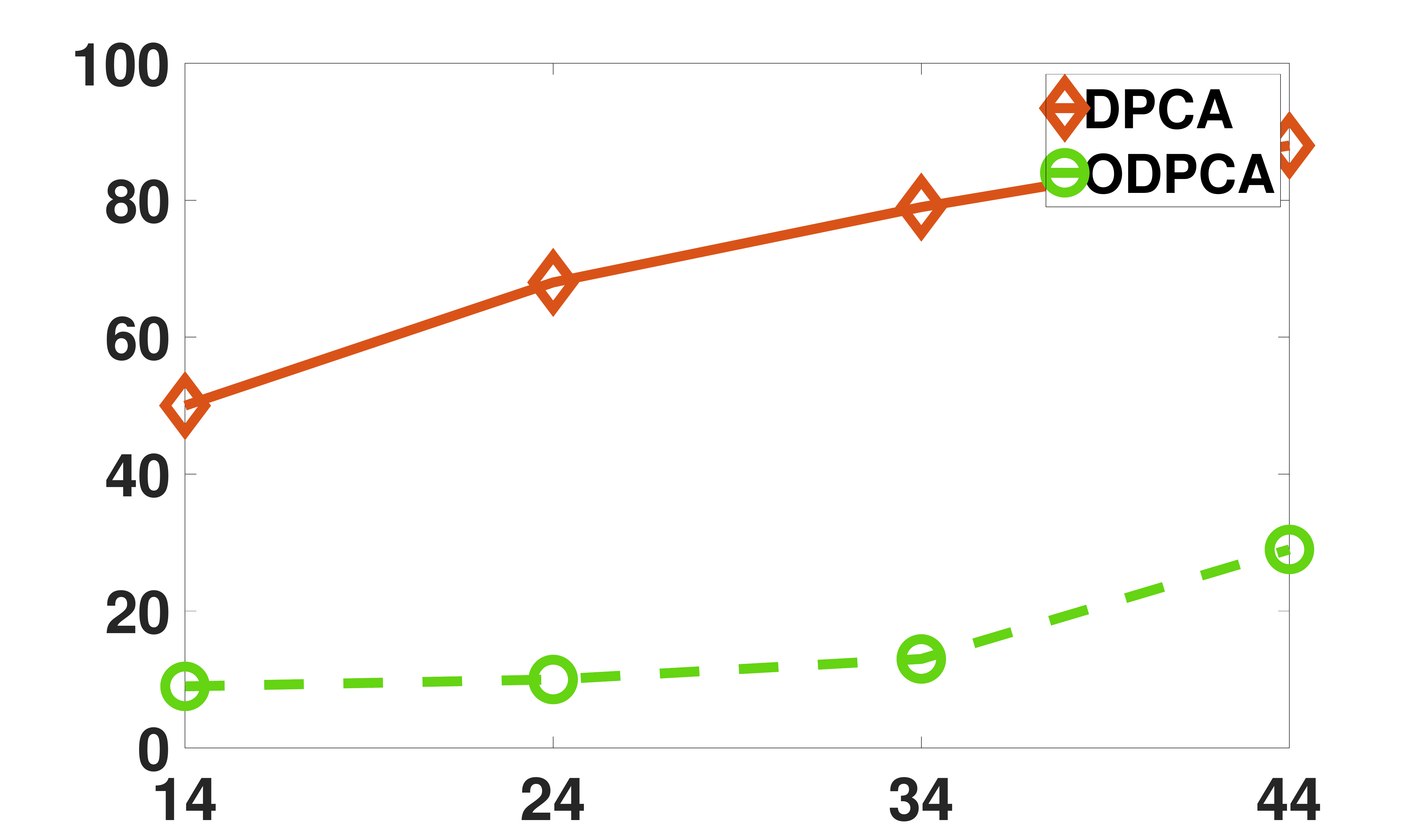}
\includegraphics[scale=0.089]{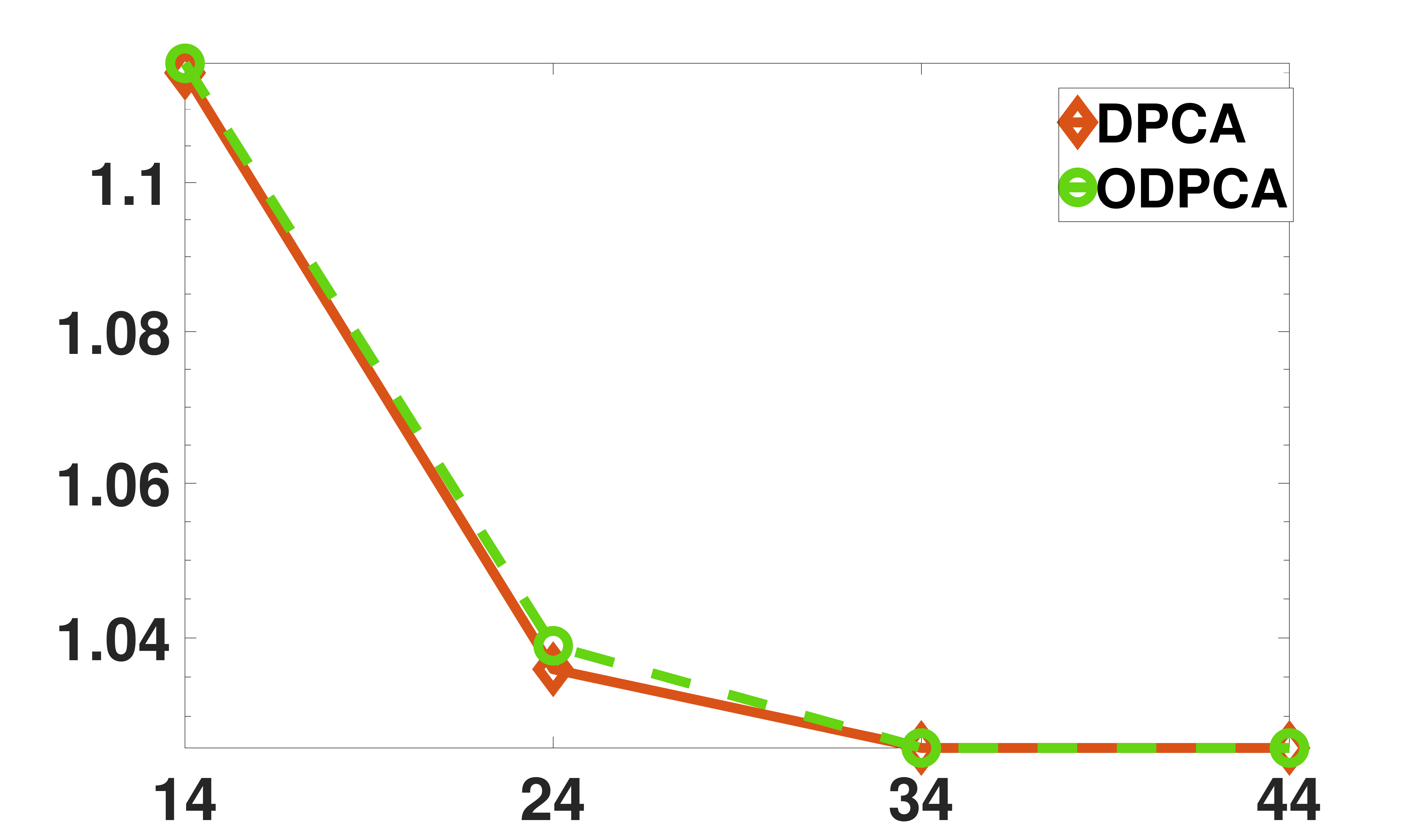}
\\
(a) ~ MNIST
\\
\includegraphics[scale=0.089]{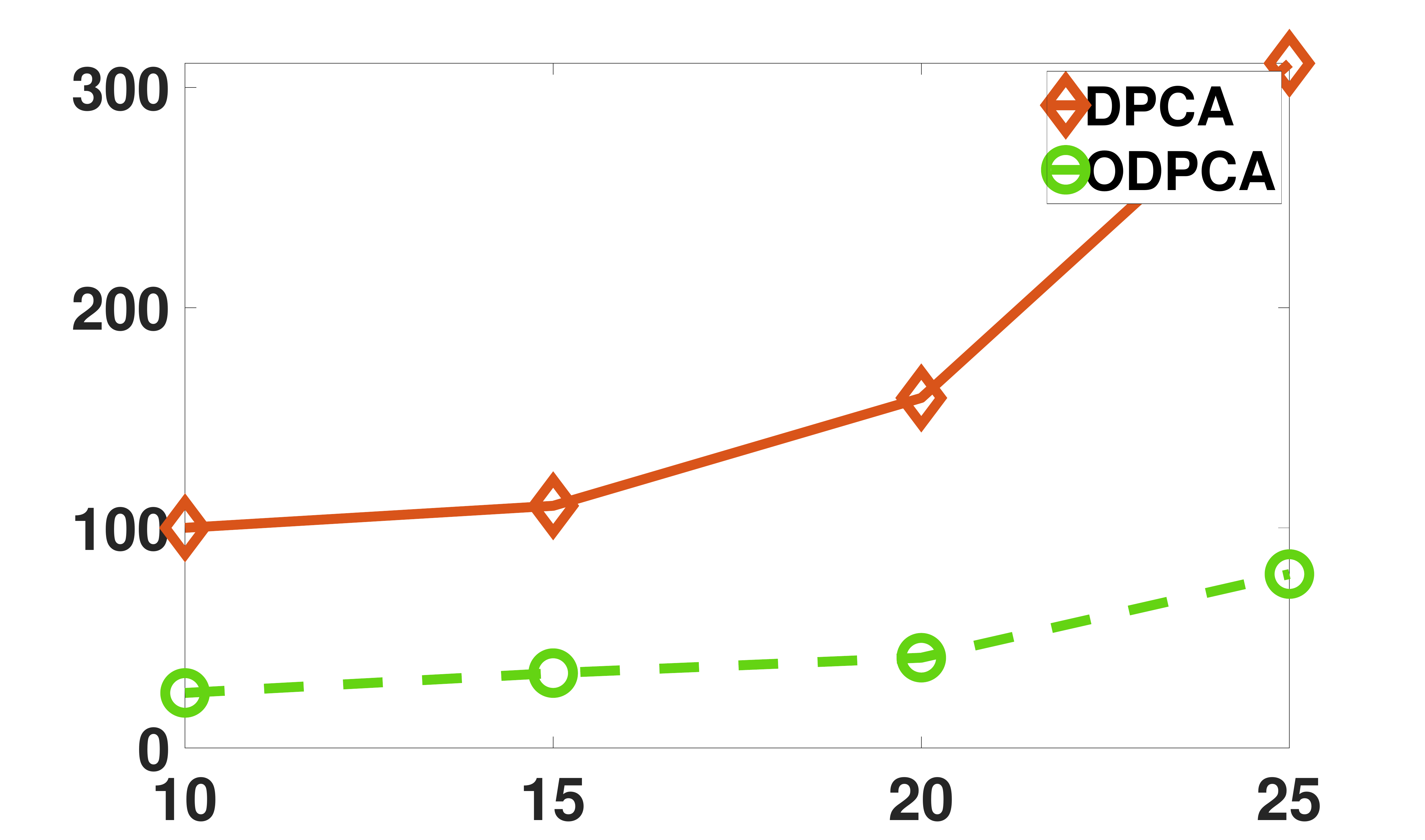}
\includegraphics[scale=0.089]{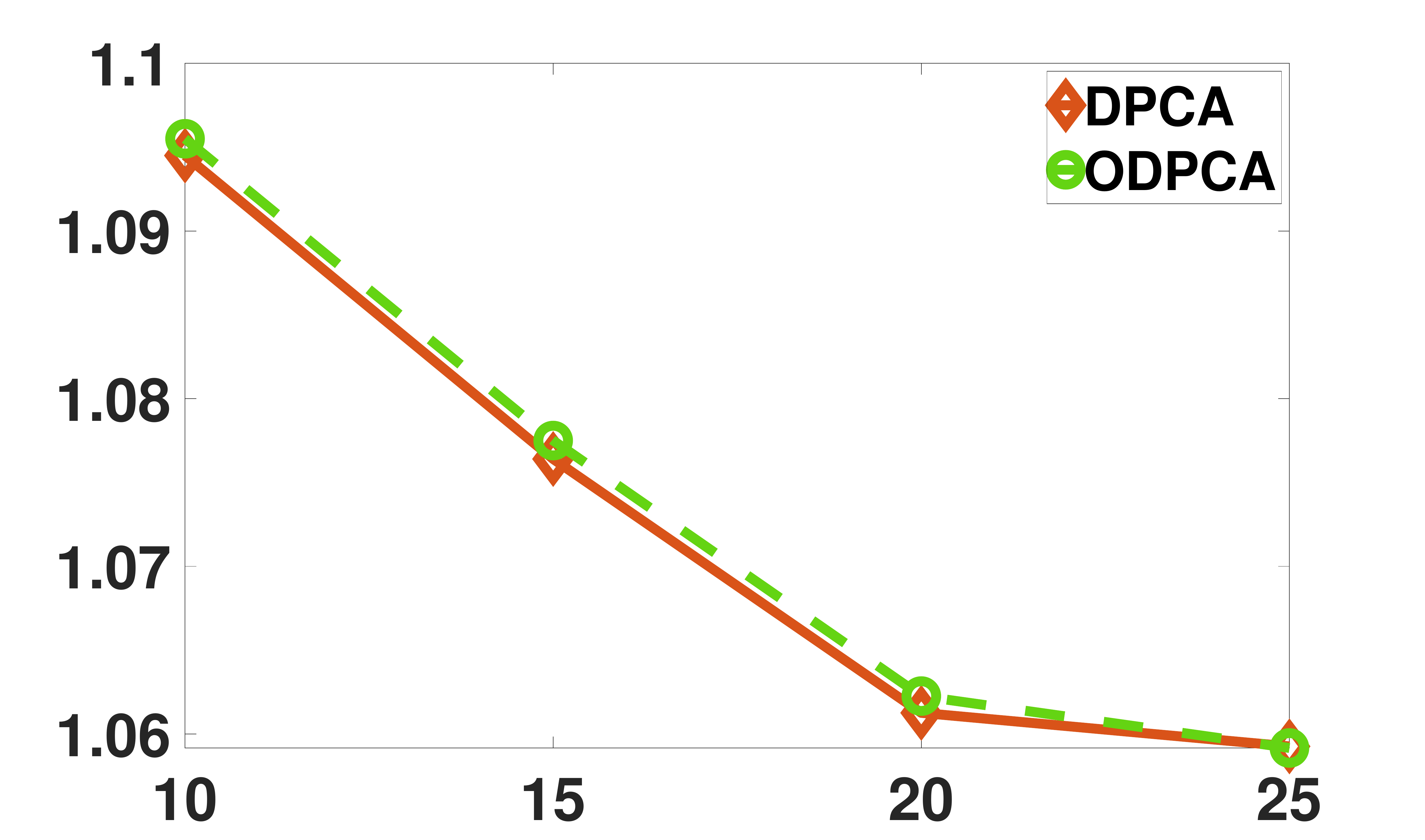}
\\
(b) ~ BOW NYTimes 
\\
\end{tabular}
}
\caption{$\boldsymbol{k}$-means Clustering: running time (left), and relative clustering cost (right) v.s. projection
dimension. }\label{fig:kmeans}
\end{figure}
Algorithm~\ref{algo:2} is implemented on a network of a star topology, comprising of $(m,T)=(5, 10)$ for NewsGroups and MNIST datasets, and $(m,T)=(5, 20)$ for BOW NYTimes. 

We analyze both the computational cost, and the learning accuracy, under different scenarios. The following distributed procedure on the global data is being used as the baseline for assessing the performance of both \textsc{Dpca} and \textsc{Odpca}. First, each local node computes all $d$ eigenvectors of its own data, and transmits them to the central node. Then, this node aggregates the information collected from the local nodes, and computes the top $K$ eigenvectors. 

The \textsc{Dpca} algorithm is implemented as follows. For each $\ell \in [m]$, the $\ell$-th node computes the projection matrix $\hat{\bV}_{K+Z}^{(\ell)}$ corresponding to the top $K+Z$ empirical eigenvectors of its own data, for some integer number $Z$. Then, the central node calculates $\bar{\bSigma}= m^{-1}\sum\limits_{i=1}^m \hat{\bV}_{K+Z}^{(\ell)} \hat{\bV}_{K+Z}^{(\ell)^\top}$, followed by another round of eigenvalue decomposition to obtain the $K$ leading eigenvectors of $\bar{\bSigma}$. A similar procedure is employed in the first round of \textsc{Odpca}, at every time $t \in [T]$. Intuitively, the goal of transmitting $Z$ additional or fewer eigenvectors is to demonstrate the computational costs of the algorithms. Henceforth, we refer to $K+Z$ as \textit{projection dimension}.   

For the low-rank approximation tasks, we report the estimation error of the solution obtained by \textsc{Dpca} and \textsc{Odpca}, normalized to that of the baseline algorithm described above. Fig.~\ref{fig:lowrank} shows the average results over $5$ replications. Note that the horizontal axis reflects the projection dimension $K+Z$. The left column illustrating the computational time suggests that \textsc{Odpca} is remarkably faster than \textsc{Dpca}. Indeed, the running time of \textsc{Odpca} improves over that of \textsc{Dpca} by a factor between 2 to 5. The column on the right-hand-side of Fig.~\ref{fig:lowrank} indicates the estimation performance. It is clear that the normalized error of \textsc{Odpca} is comparable to that of \textsc{Dpca}. Hence, the speed-up technique does not lead to any distinguishable sacrifice in the performance of \textsc{Odpca}.

Next, we illustrate the performance of \textsc{Odpca} on the distributed $\boldsymbol{k}$-means clustering. First, \textsc{Dpca} and \textsc{Odpca} compute the top $K$ eigenvectors, and send them to all nodes. Each node of the distributed $\boldsymbol{k}$-means algorithm projects its own data onto the new feature space, using the aforementioned $K$ eigenvectors. We run the algorithm in the work of Balcan et al.~\cite{balcan2013distributed} over $5$ replications. 

Similar to the previous task, we report the clustering cost of the solution obtained by \textsc{Dpca} and \textsc{Odpca}, to that of results provided by running Lloyd's method on the full data. Fig.~\ref{fig:kmeans} shows the results for $\boldsymbol{k}$-means clustering tasks. It can be easily seen that \textsc{Odpca}'s solution is approximately as accurate as \textsc{Dpca}'s one. Further, a large decrease is observed in the running time of \textsc{Odpca} compared to \textsc{Dpca}. 

Both figures indicate that the proposed local and global aggregation steps lead to fast computation of the principal eigenspaces, while preserving the accuracy almost the same. 

\bibliographystyle{IEEEtran}
\bibliography{ref}

\end{document}